\relax
\documentclass[letterpaper]{article} 
\usepackage{aaai21}  
\usepackage{times}  
\usepackage{helvet} 
\usepackage{courier}  
\usepackage[hyphens]{url}  
\usepackage{graphicx} 
\urlstyle{rm} 
\usepackage{graphicx}  
\usepackage{natbib}  
\usepackage{caption} 
\frenchspacing  
\setlength{\pdfpagewidth}{8.5in}  
\setlength{\pdfpageheight}{11in}  
\usepackage{amsmath}
\usepackage{amsthm}
\usepackage{amsfonts}
\usepackage{algorithm}
\usepackage{algorithmic}
\usepackage{subfigure}
\usepackage{epstopdf}
\usepackage{enumitem}
\usepackage{multirow}
\usepackage{diagbox}
\usepackage{eqparbox}

\usepackage{dblfloatfix}
\newtheorem{lemma}{Lemma}
\newtheorem{theorem}{Theorem}
\newtheorem{proposition}[theorem]{Proposition}

\usepackage[switch]{lineno}

 \pdfinfo{
} 

\setcounter{secnumdepth}{0} 

%
\setlength\titlebox{2.5in} 
\title{Effective and Sparse Count-Sketch via $k$-means clustering}
\author {
    Yuhan Wang,
    Zijian Lei,
    Liang Lan
  \\
}
\affiliations {
     Department of Computer Science \\
     Hong Kong Baptist University \\
     Hong Kong SAR, China\\
    19437595@life.hkbu.edu.hk, \{cszjlei,lanliang\}@comp.hkbu.edu.hk
}
\begin{document}
\maketitle

\begin{abstract}
Count-sketch is a popular matrix sketching algorithm that can produce a sketch of an input data matrix $\mathbf{X}$ in $O(nnz(\mathbf{X}))$ time where $nnz(\mathbf{X})$ denotes the number of non-zero entries in $\mathbf{X}$. The sketched matrix will be much smaller than $\mathbf{X}$ while preserving most of its properties. Therefore, count-sketch is widely used for addressing high-dimensionality challenge in machine learning. However, there are two main limitations of count-sketch: (1) The sketching matrix used count-sketch is generated randomly which does not consider any intrinsic data properties of $\mathbf{X}$. This data-oblivious matrix sketching method could produce a bad sketched matrix which will result in low accuracy for subsequent machine learning tasks (e.g., classification); (2) For highly sparse input data, count-sketch could produce a dense sketched data matrix. This dense sketch matrix could make the subsequent machine learning tasks more computationally expensive than on the original sparse data $\mathbf{X}$. To address these two limitations, we first show an interesting connection between count-sketch and $k$-means clustering by analyzing the reconstruction error of the count-sketch method. Based on our analysis, we propose to reduce the reconstruction error of count-sketch by using $k$-means clustering algorithm to obtain the low-dimensional sketched matrix. In addition, we propose to solve $k$-mean clustering using gradient descent with $\epsilon$-$\mathcal{L}_1$ ball projection to produce a sparse sketched matrix. Our experimental results based on six real-life classification datasets have demonstrated that our proposed method achieves higher accuracy than the original count-sketch and other popular matrix sketching algorithms. Our results also demonstrate that our method produces a sparser sketched data matrix than other methods and therefore the prediction cost of our method will be smaller than other matrix sketching methods. 
\end{abstract}

\section{Introduction}
Matrix sketching \cite{woodruff2014sketching} is a powerful dimensionality reduction method that can efficiently find a small matrix to replace the original large matrix while preserving most of its properties. For an input large data matrix $\mathbf{X} \in \mathbb{R}^{n \times d}$ where $n$ is the number of samples and $d$ is the number of features, matrix sketching methods generate a sketch of $\mathbf{X}$ by multiplying it with a random sketching matrix $\mathbf{R} \in \mathbb{R}^{d \times r}$ $(r \ll d)$ with certain properties. Compared with traditional dimensionality reduction methods (e.g., Principal Component Analysis (PCA) \cite{jolliffe2011principal}), matrix sketching methods can obtain the sketched matrix very efficient with certain theoretical guarantees \cite{woodruff2014sketching}. Therefore, matrix sketching has gained significant research attention and has been used widely for handling high-dimensional data in machine learning \cite{mahoney2011randomized, ailon2006approximate, bojarski2017structured, choromanski2017unreasonable}.

A typical way of applying matrix sketching in machine learning problem is \textit{sketch and solve} \cite{dahiya2018empirical}. For example, in a linear classification problem with training data $\{\mathbf{X}, \mathbf{y}\}$ where $\mathbf{X} \in \mathbb{R}^{n \times d}$ is a large input feature matrix and $\mathbf{y} \in \mathbb{R}^{n}$ is the corresponding label vector, a classification model can be trained by solving $\min_{\mathbf{w} \in \mathbb{R}^d }\sum_{i=1}^{n}l(\mathbf{w}^T\mathbf{x}_i, y_i) + \lambda\|\mathbf{w}\|_2$ where $l(\cdot)$ denotes a loss function (e.g., hinge loss). By using matrix sketching, we can first obtain a sketched data matrix $\widetilde{\mathbf{X}} \in \mathbb{R}^{n \times r}$ by $\widetilde{\mathbf{X}} = \mathbf{XR}$ and then solve a much smaller problem $\min_{\mathbf{v} \in \mathbb{R}^r}\sum_{i=1}^{n}l(\mathbf{v}^T\widetilde{\mathbf{x}}_i, y_i) + \lambda\|\mathbf{v}\|_2$. Then, the expensive computation on original large matrix $\mathbf{X}$ can be replaced by computation on small matrix $\widetilde{\mathbf{X}}$. This \textit{sketch and solve} method has also been used to speedup other machine learning tasks, such as least squares regression \cite{dobriban2019asymptotics}, low-rank approximation \cite{tropp2017practical,clarkson2017low} and $k$-means clustering \cite{boutsidis2010random, liu2017sparse}.

Recent advances in randomized numerical linear algebra \cite{martinsson2020randomized} has provided a solid theoretical foundation for matrix sketching. Various methods have been proposed to construct the random matrix $\mathbf{R}$. The early method \cite{dasgupta1999elementary} constructs a dense random Gaussian matrix $\mathbf{R}$ where each element in $\mathbf{R}$ is generated from a Gaussian distribution $\mathcal{N}(0, \frac{1}{d})$. This method based on dense random Gaussian matrix $\mathbf{R}$ requires $O(ndr)$ time for computing the sketched matrix $\widetilde{\mathbf{X}} = \mathbf{XR}$. \citet{achlioptas2003database} proposed to generate a sparser random matrix $\mathbf{R}$ where each element in $\mathbf{R}$ is generated from \{$-1$, 0, 1\} following a discrete distribution. It will reduce the computation complexity from $O(ndr)$ to $O(\frac{1}{3}ndr)$. In recent years, two famous fast random projection matrices were proposed for efficiently computing the projection $\mathbf{XR}$. The first one is the Subsampled Randomized Hadamard Transform (SRHT) which can achieve $O(ndlog(r))$ time for computing $\mathbf{XR}$ \cite{tropp2011improved, ailon2009fast}. The second method is called count-sketch \cite{clarkson2017low} which can compute $\mathbf{XR}$ in $O(nnz(\mathbf{X}))$ time for any input $\mathbf{X}$ which makes the count-Sketch method particularly suitable for sparse input data. In our paper, we focus on improving the count-sketch algorithm in the context of classification. 

Count-sketch constructs the random matrix $\mathbf{R}$ by a product of two matrices $\mathbf{D}$ and $\mathbf{\Phi}$, i.e., $\mathbf{R = D\Phi}$, where $\mathbf{D} \in \mathbb{R}^{d\times d}$ is a random diagonal matrix where each diagonal values is uniformly chosen from $\{1, -1\}$ and $\mathbf{\Phi} \in \mathbb{R}^{d\times r}$ is a very sparse matrix where each row has only one randomly selected entry equal to 1 and all other are 0. Previously \citet{paul2014random} applied count-sketch for linear SVM classification and showed that linear SVM trained on the sketched data matrix can ensure comparable generalization ability as in the original space in the case of classification. However, there are two main limitations of count-sketch: (1) It is a data-oblivious method where the generation of sketching matrix $\mathbf{R}$ is totally independent of input data matrix $\mathbf{X}$ and therefore the sketched matrix may not be effective for the subsequent classification algorithm; (2) The sketched data matrix $\Tilde{\mathbf{X}}$ will not maintain the same sparsity rate as the original input data $\mathbf{X}$. It could make the subsequent classification algorithm on the sketched data more computationally expensive than on the original data $\mathbf{X}$. Even though data-oblivious matrix sketching has been extensively studied, few studies focus on efficient data-dependent matrix sketching. Recently, \citet{xu2017efficient} proposed to use the approximated singular value decomposition (SVD) as the projection subspace. \citet{DBLP:conf/aaai/LeiL20} proposed to improve SRHT by non-uniform sampling by exploiting data properties. However, both of them will produce a dense sketched matrix for sparse input data.    

In this paper, we focus on addressing the aforementioned two limitations of count-sketch. To address the first limitation, we first show an interesting connection between count-sketch and $k$-means clustering by analyzing the reconstruction error of count-sketch. Based on our analysis, we propose to reduce the reconstruction error of count-sketch by using $k$-means clustering to obtain the low-dimensional sketched data matrix. To address the second limitation, we propose to get sparse cluster centers by optimizing $k$-means objective function using gradient descent with $\epsilon$-$\mathcal{L}_1$ ball projection in each iteration. Finally, we compare our proposed methods with the other five popular matrix sketching algorithms on six real-life datasets. Our experimental results clearly demonstrate that our proposed data-dependent matrix sketching methods achieve higher accuracy than count-sketch and other random matrix sketching algorithms. Our results also show our method produces a sparser sketched data matrix than count-sketch and other matrix sketching methods. The prediction cost of our method is smaller than other matrix sketching methods.

\section{Preliminaries}
\subsection{Randomized Matrix Sketching}
Given a data matrix $\mathbf{X} \in \mathbb{R}^{n \times d}$ and a random sketching matrix $\mathbf{R} \in \mathbb{R}^{d \times r}$ with $r \ll d$, a sketched matrix is produced by 
\begin{eqnarray}
\mathbf{\widetilde{X}} = \mathbf{XR} \in \mathbb{R}^{n \times r}.
\end{eqnarray}

Note that the matrix $\mathbf{R}$ is randomly generated and is independent of the input data $\mathbf{X}$. As shown in the following Johnson-Lindenstrauss Lemma (JL lemma), randomized matrix sketching can preserve the pairwise distance of all data points using the sketched data matrix $\mathbf{\widetilde{X}}$. 

\begin{lemma} [Johnson-Lindenstrauss Lemma (JL lemma) \cite{johnson1984extensions}]\label{theorem:JL_lemma}
For any $0<\epsilon<1$ and any integer n, let $r = O(\log n/\epsilon^2)$ and $\mathbf{R} \in \mathbb{R}^{d \times r}$ be a random orthonormal matrix. Then for any set $\mathbf{X}$ of $n$ points in $\mathbb{R}^d$, the following inequality about pairwise distance between any two data points $\mathbf{x}_i$ and $\mathbf{x}_j$ in $\mathbf{X}$ holds true with high probability:
\begin{equation*}
(1-\epsilon)\|\mathbf{x}_i-\mathbf{x}_j\|_2 \le \|\mathbf{R}^{T}\mathbf{x}_i-\mathbf{R}^{T}\mathbf{x}_j\|_2 \le(1+\epsilon)\|\mathbf{x}_i-\mathbf{x}_j\|_2.
\end{equation*}
\end{lemma}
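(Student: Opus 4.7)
The plan is to follow the classical two-step argument: establish a sharp concentration inequality for the sketched squared norm of a single fixed vector, and then union-bound over all $\binom{n}{2}$ pairs of points.

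First I would reduce the pairwise-distance claim to a single-vector norm-preservation statement. For any pair $\mathbf{x}_i,\mathbf{x}_j$, set $\mathbf{v} = \mathbf{x}_i - \mathbf{x}_j$; by linearity of $\R^T$ the two-sided inequality is equivalent to $(1-\epsilon)\|\mathbf{v}\|_2 \le \|\R^T \mathbf{v}\|_2 \le (1+\epsilon)\|\mathbf{v}\|_2$, and after normalizing $\mathbf{v}$ to a unit vector $\mathbf{u}$ and squaring (which only changes $\epsilon$ by a constant factor) the task becomes
\begin{equation*}
\Pr\!\left[\,\bigl|\|\R^T \mathbf{u}\|_2^2 - 1\bigr| > \epsilon \,\right] \le \delta
\end{equation*}
for any fixed unit vector $\mathbf{u}$ and a sufficiently small $\delta$.

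Second, I would prove this concentration estimate for an appropriately scaled $\R$. An isotropy calculation gives $\mathbb{E}\bigl[\|\R^T \mathbf{u}\|_2^2\bigr] = 1$, and the coordinates of $\R^T \mathbf{u}$ are sub-Gaussian (independent Gaussians for a Gaussian sketch; uniformly distributed on the sphere and close to independent for a uniformly random orthonormal $\R$). A Chernoff / moment-generating-function bound applied to the resulting chi-squared-like sum of squares yields
\begin{equation*}
\Pr\!\left[\,\bigl|\|\R^T \mathbf{u}\|_2^2 - 1\bigr| > \epsilon \,\right] \le 2\exp\!\bigl(-c\, r\, \epsilon^2\bigr)
\end{equation*}
for some absolute constant $c>0$ on $\epsilon \in (0,1)$.

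Third, I would take a union bound over the fewer than $n^2$ difference vectors, bounding the total failure probability by $2n^2 \exp(-c\,r\,\epsilon^2)$. Choosing $r \ge C \log n / \epsilon^2$ for a sufficiently large constant $C$ drives this below any prescribed threshold, which is the claimed "with high probability" conclusion.

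The main obstacle is extracting the exponent $-c\,r\,\epsilon^2$ in the concentration step rather than the weaker $-c\,r\,\epsilon$, because it is precisely this quadratic dependence on $\epsilon$ that makes the embedding dimension only logarithmic in $n$. This requires a careful second-order expansion of the moment generating function of $\|\R^T \mathbf{u}\|_2^2$; the estimate is routine for Gaussian entries, but for a uniformly random orthonormal matrix as stated one must either transfer from the Gaussian case via a rotational-invariance argument or invoke measure concentration on the Stiefel manifold (for example through a Hanson--Wright-type inequality).
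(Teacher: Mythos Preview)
Your proof sketch is the standard, correct argument for the Johnson--Lindenstrauss lemma: reduce to a single-vector norm-preservation statement, establish sub-Gaussian concentration for $\|\R^T\mathbf{u}\|_2^2$ giving a tail bound of the form $2\exp(-c\,r\,\epsilon^2)$, and union-bound over the $O(n^2)$ pairs. There is nothing wrong with it.

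However, the paper does \emph{not} prove this lemma at all. Lemma~\ref{theorem:JL_lemma} is stated in the Preliminaries section purely as a cited background result (attributed to \cite{johnson1984extensions}), and the paper moves on immediately to describe count-sketch. The only result the paper actually proves is Proposition~\ref{proposition:equivalance}, which is unrelated. So there is no ``paper's own proof'' to compare against here; your sketch stands on its own as a supplement the paper never intended to include.
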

\subsection{Count-Sketch}
Among various methods for constructing the sketching matrix $\mathbf{R}$, count-sketch (or called sparse embedding) is well suited for sparse input data $\mathbf{X}$ since it can achieve $O(nnz(\mathbf{X}))$ time complexity for computing $\mathbf{XR}$. 
Count-sketch \cite{clarkson2017low} constructs the random matrix $\mathbf{R} \in \mathbb{R}^{d \times r}$ as $\mathbf{R = D\Phi}$ where $\mathbf{D}$ and $\mathbf{\Phi}$ are defined as follows,
\begin{itemize}
    \item $\mathbf{D}$ is a $d \times d$ diagonal matrix with each diagonal entry independently chosen to be 1 or $-1$ with probability 0.5.
    \item $\mathbf{\Phi}\in \{0,1\}^{d\times r}$ is a $d \times r$ binary matrix with $\mathbf{\Phi}_{i, h(i)}$ = 1, and all remaining entries 0. $h$ is a random map such that for any $i \in \{1, 2, \dots, d\}$, $h(i) = j$, for $j \in \{1, 2, \dots, r\}$ with probability $\frac{1}{r}$.
\end{itemize}
Note that random sketching matrix $\mathbf{R}$ in count-sketch is a very sparse matrix where each row have only one nonzero entry. This nonzero entry is uniformly chosen and the value is either 1 or $-1$ with probability 0.5. $\mathbf{XR}$ can be computed in $O(nnz(\mathbf{X}))$ time because each nonzero entry in $\mathbf{X}$ is at most by multiplied by one nonzero entry in $\mathbf{XR}$.  

\section{Methodology}
Even though count-sketch has been successfully used for dimensionality reduction in linear SVM classification \citet{paul2014random}, we argue that this data-oblivious method has two limitations: (1) The sketching matrix $\mathbf{R = D\Phi}$ is randomly generated. It could result in bad sketched data when some important columns in $\mathbf{X}$ are not sampled by using $\mathbf{R}$; (2) Count-sketch will not preserve the sparsity rate of the original data. 

When applying count-sketch for data classification, the first limitation could result in bad low-dimensional embedding and then produce a classification model with low accuracy. To illustrate this limitation, we show the classification accuracy of using count-sketch for dimensionality reduction on mnist dataset for ten different runs in Figure \ref{fig1}. As shown in Figure \ref{fig1}, count-sketch (the blue line with triangle markers) could produce low classification accuracy in some runs and also the accuracy is not stable. We also show the classification of our proposed method that will be introduced later in this figure (the red line with circle markers). We can see that our proposed method produces significantly better accuracy than count-sketch.

\begin{figure}[!t]
\centering
\includegraphics[width=0.9\columnwidth]{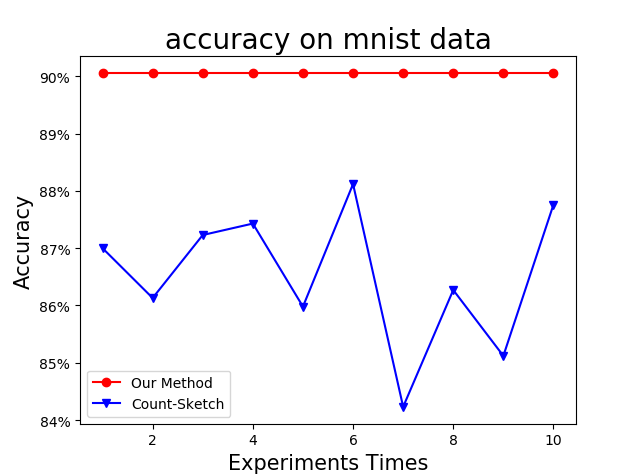}
\caption{Classification Accuracy of Using Count-sketch on Different Runs}

\label{fig1}
\end{figure}

The second limitation of count-sketch is that, when used with sparse input data, the sketched matrix could be much denser than the original data. We checked the sparsity rate of mnist data before and after count-sketch. The original sparsity rate for mnist data is 80.78\% and the sparsity rate is significantly decreased to 1.72\% in the sketched data. Therefore, the sketched data could contain more nonzero values than the original data and make the subsequent classification algorithm slower. More examples can be found in the experiment section.     

\subsection{Connection between Count-Sketch and $k$-means clustering}
Since the construction of matrix $\mathbf{D}$ and $\mathbf{\Phi}$ in count-sketch is oblivious to the input data matrix $\mathbf{X}$, it could produce a bad sketched matrix (e.g., some important columns in $\mathbf{X}$ are not be sampled in $\mathbf{\Phi}$) and therefore results in low classification accuracy. In this paper, we seek to develop a data-dependent count-sketch method for addressing the two limitations of count-sketch. To motivate our method, we start by analyzing the reconstruction error of the original count-sketch method and show an interesting connection between count-sketch and $k$-means clustering. 

Let us define a diagonal scaling matrix $\mathbf{S} \in \mathbb{R}^{r \times r} $ as 
\begin{equation}
 \mathbf{S}_{ii} =  \frac{1}{\sum_{j=1}^{d}\Phi_{ji}}.
\end{equation}
Note that $(\mathbf{D\Phi}\mathbf{ S}^{\frac{1}{2}})^T(\mathbf{D\Phi}\mathbf{ S}^{\frac{1}{2}})$ equals to an identity matrix with size $r \times r$. 
The reconstruction error of count-sketch can be represented as
\begin{equation}\label{eq:reconstructionError}
\begin{split}
&\|\mathbf{X} - \mathbf{X}(\mathbf{D\Phi}\mathbf{ S}^{\frac{1}{2}})(\mathbf{D\Phi}\mathbf{ S}^{\frac{1}{2}})^{T}||_F^2 \\
&= \|\mathbf{X} - \mathbf{XD\Phi}\mathbf{S}\mathbf{\Phi}^{T}\mathbf{D}^T||_F^2
\end{split}
\end{equation}
where $\|\mathbf{A}\|_F$ denotes the Frobenius norm of matrix $\mathbf{A}$ which is defined as the square root of the sum of the squares of every elements in $\mathbf{A}$. Note that $\|\mathbf{A}\|_F = (trace(\mathbf{A}^T\mathbf{A}))^{1/2} = (trace(\mathbf{A}\mathbf{A})^T)^{1/2}$ where the $trace()$ operator returns the sum of diagonal entries of an input matrix. As shown in the following Proposition \ref{proposition:equivalance}, the reconstruction error of count-sketch as shown in (\ref{eq:reconstructionError}) is equivalent to the objective function of applying $k$-means clustering to cluster the $d$ columns of $\mathbf{XD}$ into $r$ clusters.

\begin{proposition}\label{proposition:equivalance}
The reconstruction error of count-sketch 
$\|\mathbf{X} - \mathbf{XD\Phi S}\mathbf{\Phi}^{T}\mathbf{D}^T||_F^2$ is equivalent to the objective function of $k$-means clustering on the columns of matrix product $\mathbf{M} = \mathbf{XD}$ if we treat $\mathbf{\Phi}$ as a learnable variable which denotes the cluster membership of each column in $\mathbf{M}$.
\end{proposition}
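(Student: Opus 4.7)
The plan is to reduce the reconstruction error to a clean two-factor expression in $\mathbf{M} = \mathbf{XD}$, and then read off the $k$-means objective from the resulting matrix product. The key observation is that $\mathbf{D}$ is a $\pm 1$ diagonal matrix, so $\mathbf{D}\mathbf{D}^T = \mathbf{D}^T\mathbf{D} = \mathbf{I}$, meaning right-multiplication by $\mathbf{D}$ is an orthogonal transformation and preserves the Frobenius norm. Using this, I would rewrite
\begin{equation*}
\|\mathbf{X} - \mathbf{XD\Phi S\Phi}^T\mathbf{D}^T\|_F^2
= \|(\mathbf{X} - \mathbf{XD\Phi S\Phi}^T\mathbf{D}^T)\mathbf{D}\|_F^2
= \|\mathbf{XD} - \mathbf{XD\Phi S\Phi}^T\|_F^2
= \|\mathbf{M} - \mathbf{M\Phi S\Phi}^T\|_F^2,
\end{equation*}
so the $\mathbf{D}$ on the right disappears and we are left with a cleaner quantity on the columns of $\mathbf{M}$.

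Next I would interpret each factor of $\mathbf{M\Phi S\Phi}^T$ in cluster language. Because every row of $\mathbf{\Phi}$ has exactly one nonzero entry, the function $h:\{1,\dots,d\}\to\{1,\dots,r\}$ assigning each column of $\mathbf{M}$ to the column index of its $1$-entry naturally defines a partition of the $d$ columns into $r$ clusters $C_1,\dots,C_r$. Then $\mathbf{M}\mathbf{\Phi}$ has columns $\sum_{j\in C_i}\mathbf{m}_j$, the diagonal scaling by $\mathbf{S}_{ii}=1/|C_i|$ converts these to cluster means $\mathbf{c}_i = \frac{1}{|C_i|}\sum_{j\in C_i}\mathbf{m}_j$, and finally right-multiplication by $\mathbf{\Phi}^T$ broadcasts each $\mathbf{c}_i$ back into all column positions $j$ with $h(j)=i$. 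Hence the $j$-th column of $\mathbf{M\Phi S\Phi}^T$ is exactly $\mathbf{c}_{h(j)}$.

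Combining these two steps, the Frobenius norm decomposes column by column and I obtain
\begin{equation*}
\|\mathbf{M} - \mathbf{M\Phi S\Phi}^T\|_F^2
= \sum_{j=1}^{d}\|\mathbf{m}_j - \mathbf{c}_{h(j)}\|_2^2
= \sum_{i=1}^{r}\sum_{j\in C_i}\|\mathbf{m}_j - \mathbf{c}_i\|_2^2,
\end{equation*}
which is precisely the $k$-means clustering objective with $r$ clusters applied to the columns of $\mathbf{M}=\mathbf{XD}$, with $\mathbf{\Phi}$ encoding the cluster memberships. I do not expect a serious obstacle here: the only subtle point is justifying that the dangling $\mathbf{D}^T$ can be absorbed, and that follows immediately from $\mathbf{D}^T\mathbf{D}=\mathbf{I}$ together with the orthogonal-invariance of $\|\cdot\|_F$. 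Writing the proof will mainly consist of executing the algebra above in this order and spelling out the cluster-assignment interpretation of $\mathbf{\Phi}$ and $\mathbf{S}$.
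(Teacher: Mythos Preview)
Your proposal is correct and follows essentially the same approach as the paper: reduce the reconstruction error to $\|\mathbf{M} - \mathbf{M\Phi S\Phi}^T\|_F^2$ using $\mathbf{D}^T\mathbf{D}=\mathbf{I}$, then interpret $\mathbf{M\Phi S\Phi}^T$ column-wise as the matrix of cluster centroids to recover the $k$-means objective. The only cosmetic difference is that the paper first rewrites $\mathbf{X}=\mathbf{XDD}^T$ to factor out $\mathbf{D}^T$ on the right and then removes it via a trace computation, whereas you invoke orthogonal invariance of the Frobenius norm in a single step; the underlying idea is identical.
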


\begin{proof}
We first rewrite the reconstruction error as $\|\mathbf{X} - \mathbf{XD\Phi S}\mathbf{\Phi}^{T}\mathbf{D}^T||_F^2 = \|\mathbf{X} - \mathbf{XD}\mathbf{D}^T + \mathbf{XD}\mathbf{D}^T - \mathbf{XD\Phi S}\mathbf{\Phi}^{T}\mathbf{D}^T||_F^2$. Note that $\mathbf{D}$ is a $d \times d$ diagonal matrix with each diagonal entry either 1 or $-1$, therefore $\mathbf{X} = \mathbf{XD}\mathbf{D}^T$. Let us use $\mathbf{M}$ to denote $\mathbf{XD}$, we will have 
\begin{equation}\label{eq:XDD_t}
\begin{split}
&\|\mathbf{X} - \mathbf{XD\Phi S}\mathbf{\Phi}^{T}\mathbf{D}^T||_F^2  \\ 
=&\|\mathbf{XD}\mathbf{D}^T - \mathbf{XD\Phi S}\mathbf{\Phi}^{T}\mathbf{D}^T||_F^2  \\
 = &\| \mathbf{M}\mathbf{D}^T - \mathbf{M\Phi S}\mathbf{\Phi}^{T}\mathbf{D}^T||_F^2 
\end{split}
\end{equation}

Next, we will show that $\|\mathbf{M}\mathbf{D}^T - \mathbf{M\Phi S}\mathbf{\Phi}^{T}\mathbf{D}^T||_F^2 = \|\mathbf{M} - \mathbf{M\Phi S}\mathbf{\Phi}^{T}||_F^2$ as follows, 
\begin{equation}\label{eq:XD}
\begin{split}
&\ \ \ \ \|\mathbf{M}\mathbf{D}^T - \mathbf{M\Phi S}\mathbf{\Phi}^{T}\mathbf{D}^T\|_F^2 \\
& = trace((\mathbf{M}\mathbf{D}^T - \mathbf{M\Phi S}\mathbf{\Phi}^{T}\mathbf{D}^T)\\
&\ \ \ \ \ \ \ \ \ \ \ \ \ \ \ \ \ \ \ \ \ \ \ \ \ \ \ \ \ \ \ \ \ \ \ \ \ \ \ \  (\mathbf{M}\mathbf{D}^T - \mathbf{M\Phi S}\mathbf{\Phi}^{T}\mathbf{D}^T)^T) 
\\
& = trace((\mathbf{M} - \mathbf{M\Phi S}\mathbf{\Phi}^{T})\mathbf{D}^T\mathbf{D}(\mathbf{M} - \mathbf{M\Phi S}\mathbf{\Phi}^{T})^T) 
\\
& = trace((\mathbf{M} - \mathbf{M\Phi S}\mathbf{\Phi}^{T})(\mathbf{M} - \mathbf{M\Phi S}\mathbf{\Phi}^{T})^T). \\
& = \|\mathbf{M} - \mathbf{M\Phi S}\mathbf{\Phi}^{T}||_F^2. 
\end{split}
\end{equation}

Combining (\ref{eq:XDD_t}) and (\ref{eq:XD}), the reconstruction error of count-sketch can be rewritten as
\begin{equation}\label{eq:M_clustering}
\begin{split}
& \|\mathbf{X} - \mathbf{XD\Phi S}\mathbf{\Phi}^{T}\mathbf{D}^T||_F^2 = \|\mathbf{M} - \mathbf{M\Phi S}\mathbf{\Phi}^{T}||_F^2 \\
\end{split}
\end{equation}

Based on the definition of matrix $\mathbf{\Phi}$, $\mathbf{\Phi}$ is a $d \times r$ indicator matrix which each row has only one non-zero entry. Therefore, $\mathbf{\Phi}$ can viewed as a cluster membership indicator matrix which corresponds to randomly assign $d$ columns of matrix $\mathbf{M}$ into $r$ clusters. The non-zero element $\mathbf{\Phi}_{ij} = 1$ in $i$-th row of $\mathbf{\Phi}$ denotes the $i$-th column in $\mathbf{M}$ is assigned to cluster $j$. Note that the $i$-th column of matrix product $\mathbf{M\Phi S}\mathbf{\Phi}^{T}$ is the centroid of the cluster where the $i$-th column $\mathbf{M}_{(:,i)}$ belongs to. Therefore
\begin{equation}\label{eq:M_clustering2}
\|\mathbf{M} - \mathbf{M\Phi S}\mathbf{\Phi}^{T}||_F^2 = \sum_{i=1}^{d}\|\mathbf{M}_{(:,i)}-\mathbf{c}_{I(\mathbf{M}_{(:,i)})}\|_2^2, 
\end{equation}
where $I(\mathbf{M}_{(:,i)})$ returns the index of the cluster that the $i$-th column $\mathbf{M}_{(:,i)}$ belongs to and  $\mathbf{c}_{I(\mathbf{M}_{(:,i)})}$ is the centroid of that cluster. By treating $\mathbf{\Phi}$ as a learnable variable which denotes the cluster membership, the reconstruction error of count-sketch is the same as the objective function of $k$-means algorithm on the columns of $\mathbf{M}$ as shown in \ref{eq:M_clustering2}. 
    
\end{proof}
Our proposition \ref{proposition:equivalance} provides an interesting connection between count-Sketch and $k$-means clustering. In the count-Sketch algorithm, the clustering membership indicator matrix $\mathbf{\Phi}$ is randomly generated which does not consider intrinsic data properties and it could result in bad embedding with high reconstruction error. 



\begin{algorithm}[tb]
\caption{$\epsilon$-$\mathcal{L}_1$ ball projection \cite{sculley2010web}}
\begin{algorithmic}
\STATE \textbf{Input}: $\mathbf{c} \in \mathbb{R}^{n}$, $\mathcal{L}_1$ ball radius $\lambda$, tolerance parameter $\epsilon$
\STATE \textbf{Output}: projected sparse vector $\mathbf{c} \in \mathbb{R}^{n}$ 
\end{algorithmic}
	\begin{algorithmic}[1]
		\STATE \algorithmicif{ $\|\mathbf{c}\| \le \lambda(1 + \epsilon)$ } \algorithmicreturn{ $\mathbf{c}$ }
		\STATE $l$ = 0; $u$ = $\|\mathbf{c}\|_{\infty}$; $r$ = $\|\mathbf{c}\|_1$  
		\WHILE[Bisection to find $\theta$]{$r>\lambda(1+\epsilon)$ \OR $r < \lambda$}
		 \STATE $\theta = \frac{l+u}{2}$
		 \STATE $r = \sum_{i=1}^{n} \text{max}(0, |c_i| - \theta)$
		 \STATE \algorithmicif{ $r < \lambda$ } \algorithmicthen{ $u = \theta$ } \algorithmicelse{ $l = \theta$ }
		 \ENDWHILE
		\FOR[$\mathcal{L}_1$ ball projection]{\text{$i = 1$ to $n$}} 
		\STATE ${c}_{i} = \text{sign}({c}_{i})\text{max}(0, \left|c_{i}\right|-\theta)$ 
		\ENDFOR
	\end{algorithmic}\label{alg:l1_projection}
    
\end{algorithm}

\subsection{Improved count-sketch by $k$-means and $\mathcal{L}_1$ ball projection}
As shown in (\ref{eq:M_clustering2}), the reconstruction error of count-sketch can be improved by replacing the random cluster membership indicator matrix $\mathbf{\Phi}$ in the original count-sketch algorithm by a cluster membership indicator matrix produced by $k$-means algorithm on the columns of $\mathbf{M}$. Motivated by this observation, we propose to use $k$-means algorithm to learn the cluster membership indicator matrix $\mathbf{\Phi}$ from data for lower reconstruction error. Therefore, the new cluster centers returned by $k$-means with $k$ = $r$, which equals to $\mathbf{XD\Phi S}$, can be used as the new low-dimensional feature representation. And this new method will result in low reconstruction error than the original count-sketch method. 

Apart from the reconstruction error, as mentioned earlier, another limitation of count-sketch is that it may not preserve the sparsity rate of the input data $\mathbf{X}$. In other words, the new data presentation $\mathbf{\widetilde{X}}$ could be dense even if the original data $\mathbf{X}$ is highly sparse data. This limitation could make the subsequent algorithm on projected data $\mathbf{\widetilde{X}}$ be even slower than just using the original data $\mathbf{X}$ without count-sketch. Therefore, instead of using the Lloyd’s classic $k$-means algorithm \cite{lloyd1982least}, we would like to develop a new method to obtain very sparse cluster centers. We propose to obtain sparse cluster centers by optimizing the objective of $k$-means as shown in (\ref{eq:M_clustering2}) using gradient descent together with $\mathcal{L}_1$ ball projection \cite{duchi2008efficient} in each update. 

The gradient of the $k$-means objective function $\sum_{i=1}^{d}\|\mathbf{M}_{(:,i)}-\mathbf{c}_{I(\mathbf{M}_{(:,i)})}\|_2^2$ with respect to the $j$-th cluster center $\mathbf{c}_j$ is

\begin{equation}\label{eq:k_means_gradient}
\nabla \mathbf{c}_j= \sum_{i=1}^{d}-2\delta(I(\mathbf{M}_{(:,i)}), j)(\mathbf{M}_{(:,i)} - \mathbf{c}_j),
\end{equation}

where $\delta(I(\mathbf{M}_{(:,i)}), j)$ is a binary function which return 1 if $I(\mathbf{M}_{(:,i)})$ equals to $j$ (i.e., the $i$-th column of $\mathbf{M}$ belongs to the $j$-th cluster), otherwise it returns 0. In other word, the computation of gradient $\nabla \mathbf{c}_j$ only depends on columns that belongs to $j$-th cluster in current iteration. 

By using gradient descent, in each iteration, the cluster center $\mathbf{c}_j$ can be updated as  

\begin{equation}\label{eq:cj_update}
\mathbf{c}_j = \mathbf{c}_j - \eta \nabla \mathbf{c}_j,
\end{equation}

where $\eta$ is the learning rate. However, directly using (\ref{eq:cj_update}) will not produce sparse cluster centers. 

To obtain sparse cluster centers, we will use $\epsilon$-$\mathcal{L}_1$ ball projection to make $\mathbf{c}_j$ be a sparse vector. The $\epsilon$-$\mathcal{L}_1$ ball projection is proposed in \cite{sculley2010web} which is approximated extension of exact $\mathcal{L}_1$ ball projection \cite{duchi2008efficient}. $\epsilon$-$\mathcal{L}_1$ is very effective at getting sparse cluster centers as shown in \cite{sculley2010web}. The basic idea of $\epsilon$-$\mathcal{L}_1$ ball projection is to use bisection to find a value $\theta$ that projects a dense vector $\mathbf{c}_j$ to an $\mathcal{L}_1$ ball with radius between $\lambda$ and $(1 + \epsilon)\lambda$. After $\theta$ is found, $\epsilon$-$\mathcal{L}_1$ ball projection will map the $i$-th entry in $\mathbf{c}_j$ (denoted as $c_{ji}$) to 
\begin{equation}\label{eq:cj_projection}
{c}_{ji} = \text{sign}({c}_{ji})\text{max}(0, \left|c_{ji}\right|-\theta).
\end{equation}

As shown in (\ref{eq:cj_projection}), the resulting cluster centers $\mathbf{c}_j$s will be sparse vectors since $\text{max}(0, \left|c_{ji}\right|-\theta)$ will make an element to 0 if its absolute value is smaller than $\theta$. The whole procedure of $\epsilon$-$\mathcal{L}_1$ ball projection is described in Algorithm \ref{alg:l1_projection}. 

By using Algorithm \ref{alg:l1_projection} in each iteration of optimizing $k$-means objective function by gradient descent, we will get sparse cluster centers.  

\begin{algorithm}[tb]
\caption{\textbf{E}ffective and \textbf{S}parse \textbf{C}ount-S\textbf{K}etch (ESCK)}
\begin{algorithmic}
\STATE \textbf{Input}: $\textbf{X} \in \mathbf{R}^{n \times d}$, reduced dimension $r$, iteration $t$, parameter $\epsilon$, $\lambda$ for $\mathcal{L}_1$ ball projection;
\STATE \textbf{Output}: low-dimensional data representation $\mathbf{\widetilde{X}} \in \mathbb{R}^{n \ \times r}$ and the learnt cluster membership indicator matrix $\mathbf{\Phi}$
\end{algorithmic}
	\begin{algorithmic}[1]
		\STATE Generate a diagonal random sign matrix $\mathbf{D}$ 
		\STATE Compute $\mathbf{M} = \mathbf{X}\mathbf{D}$
		\STATE Randomly pick $r$ columns from $\mathbf{M}$ as the cluster centers $\{\mathbf{c}_j\}_{j=1}^{r}$ 
		\FOR{\text{$iter = 1$ to $t$}}
		\STATE Create all zero matrix $\mathbf{\Phi} \in \mathbb{R}^{d \times r}$
		\FOR{\text{$i = 1$ to $d$}}
		\STATE $j = \text{argmin}_j \|\mathbf{M}_{(:,i)} - \mathbf{c}_j\|_2^2$
		\STATE $\mathbf{\Phi}_{i,j} = 1$
		\ENDFOR
		\STATE Update each cluster centers using (\ref{eq:cj_update})
		\STATE Obtain sparse cluster centers using Algorithm 1 
		\ENDFOR
	\STATE \algorithmicreturn{ $\mathbf{\widetilde{X}} = [\mathbf{c}_1, \mathbf{c}_2, \dots, \mathbf{c}_r]$ and $\mathbf{\Phi}$ }
	\end{algorithmic}
    
\end{algorithm}\label{alg:esCountSketch}

\subsection{Algorithm Implementation and Analysis}
We summarize our proposed algorithm for improving the original count-sketch in algorithm \ref{alg:esCountSketch} and named it as \textbf{E}ffective and \textbf{S}parse \textbf{C}ount-S\textbf{K}etch (ESCK). Our proposed algorithm first obtain $\mathbf{M}$ as shown in step 1-2 which is the same as the original count-sketch. The contribution of our proposed algorithm is to replace the randomly generated cluster membership indicator matrix $\Phi$ in count-sketch with the learned cluster membership indicator matrix $\Phi$. The $r$ sparse cluster centers will be used as the low-dimensional data representation. 
As shown from step 3 to 12, sparse cluster centers are obtained by using gradient descent with $\epsilon$-$\mathcal{L}_1$ ball projection to cluster $d$ columns into $r$ groups.

With respect to time complexity, step 2 only needs $O(nnz(\mathbf{X}))$ time because of $\mathbf{D}$ is a diagonal matrix. The time complexity for Step 3 to 12 is upper bounded by $O(ndrt)$ where $t$ is the number of iterations. For sparse input data, the time complexity in each iteration for updating cluster centers will smaller than $O(ndr)$ since both data and clusters are sparse. Empirically, the $k$-means algorithm using gradient descent converges very fast and only a few iterations is needed. In our experiments, we will show that our proposed method is only several time slower than count-sketch but the classification accuracy obtained by our method is much larger than count-sketch and other methods. Note that our proposed method can also return the learned cluster membership indicator matrix $\mathbf{\Phi}$, therefore, our algorithm can also be extended to inductive setting and generate the feature mapping for new unseen data by using $\mathbf{\widetilde{X}} = \mathbf{XD\Phi S}$ which enjoys the same low computational complexity as the original count-Sketch.

\begin{table}[htb]
	\centering
	\caption{Summary of Experimental Datasets}
	\begin{tabular}{ccccc}
		\hline
		\multirow{2}{*}{Dataset}& \# of  & \# of & \# of & sparsity      \\
		        &  samples     &  features    & classes     &  rate           \\
		\hline
		{ usps}	  & 9,298	 &  256 & 10 & 0\%  \\
	    { mnist}    & 60,000   &  780 & 10 & 87.78\%\\
		{ gisette}   & 7,000  &  5,000  & 2 & 0.85\%\\
		{ real-sim}  & 72,309 &  20,958 & 2 & 99.75\%\\
		{ rcv1-binary}      & 20,242   &  47,236  & 2 & 99.84\% \\
		{ rcv1-multi}    & 15,564  &  47,236 & 53 & 99.86\%\\
	
		\hline
		\label{table.data2}
	\end{tabular}	
\end{table} 

\begin{table*}[htb]
  \centering
  \caption{Experimental Results of Different Random Matrix Sketching Methods}
    \resizebox{\textwidth}{!}{ %
    \begin{tabular}{c|c|c|c|c|c|c|c}
\hline
    \multirow{2}[2]{*}{} & \multirow{2}[2]{*}{Performance} & usps & mnist & gisette & \multicolumn{1}{c|}{real-sim} & \multicolumn{1}{c|}{rcv1-binary} & rcv1-multi \\
          & \multicolumn{1}{c|}{} &($r$=30) & ($r$=100) & ($r$=256) & \multicolumn{1}{c|}{(r=256)} & \multicolumn{1}{c|}{($r$=256)} & ($r$=256) \\
\hline
\hline
    \multicolumn{1}{c|}{\multirow{3}[2]{*}{Gaussian}} & Accuracy(\%) & 90.60 $\pm$ 0.01 & \multicolumn{1}{c|}{88.92 $\pm$ 0.03} & 90.70 $\pm$ 0.01 & \multicolumn{1}{c|}{79.25 $\pm$ 0.06} & \multicolumn{1}{c|}{81.63 $\pm$ 0.01} & 69.71 $\pm$ 0.01 \\
          & Sparsity rate & 0\% & 0\% & 0\% & \multicolumn{1}{c|}{ 0\% } & \multicolumn{1}{c|}{0\%} &  0\% \\
          & Prediction time(ms) & 2.89ms & 20.84ms & 3.01ms & \multicolumn{1}{c|}{31.15ms} & \multicolumn{1}{c|}{7.95ms} & 22.42ms \\\hline

    \multicolumn{1}{c|}{\multirow{3}[1]{*}{Achlioptas}} & Accuracy(\%) & 90.17$\pm$0.03 & \multicolumn{1}{c|}{87.70$\pm$0.02} & 89.80$\pm$ 0.03 & \multicolumn{1}{c|}{76.85$\pm$0.06} & \multicolumn{1}{c|}{81.86$\pm$0.01} & 67.56$\pm$0.07 \\
    \multicolumn{1}{c|}{} & Sparsity rate & 0\% & 0.01\% & 0\% & \multicolumn{1}{c|}{1.07\%} & \multicolumn{1}{c|}{0.03\%} & 0.03\% \\
          & Prediction time(ms) &\multicolumn{1}{c|}{3.12ms} & 54.98ms & 3.98ms & \multicolumn{1}{c|}{41.92ms} & \multicolumn{1}{c|}{11.71ms} & 88.76ms \\
\hline

  \multicolumn{1}{c|}{\multirow{3}[1]{*}{Count-Sketch}} & Accuracy(\%) & 90.74 $\pm$ 0.01 & \multicolumn{1}{c|}{87.66 $\pm$ 0.01}& 90.37 $\pm$ 0.02 & \multicolumn{1}{c|}{77.21 $\pm$ 0.06} & \multicolumn{1}{c|}{80.19$\pm$0.02} & 69.38$\pm$ 0.06 \\
          & Sparsity rate & 0\%  & 1.72\% & 0\% & \multicolumn{1}{c|}{73.36\%} & \multicolumn{1}{c|}{73.65\%} & 74.47\% \\
          & Prediction time(ms) & 2.98ms & 50.86ms & 4.46ms & \multicolumn{1}{c|}{11.96ms} & \multicolumn{1}{c|}{4.46ms} & 24.93ms \\\hline

\multirow{3}{*}{SRHT}&Accuracy(\%) &89.86 $\pm$ 1.66&87.14 $\pm$ 0.84&90.45 $\pm$ 0.87&78.37 $\pm$ 0.20&80.29 $\pm$ 0.69&68.50 $\pm$ 0.29\\
&Sparsity rate& 0\% & 0\% & 0\% & 0\% & 0\% &0\%\\
&Prediction time(ms)&3.3ms&79.35ms&3.6ms&22.6ms&6.96ms&103ms\\
\hline
\hline

\multirow{3}{*}{SRHT-topr}&Accuracy(\%) &90.68 $\pm$ 1.51&88.15 $\pm$ 0.77&92.45 $\pm$ 0.55&82.48 $\pm$ 0.19&82.14 $\pm$ 0.24&71.01 $\pm$ 0.77\\
&Sparsity rate& 0\% & 0\% & 0\% & 0\% & 0\% &0\%\\
&Prediction time(ms)&4.22ms&75.23ms&3.3ms&46.6ms&6.14ms&101ms\\

\hline
    \multicolumn{1}{c|}{\multirow{3}[2]{*}{ESCK-full}} & Accuracy(\%) & \textit{90.90$\pm$ 0.02} & \multicolumn{1}{c|}{\textbf{90.60$\pm$0.02}} & \textit{93.25$\pm$ 0.02}& \multicolumn{1}{c|}{\textbf{88.68$\pm$0.07}} & \multicolumn{1}{c|}{\textbf{92.91$\pm$0.01}} & \textbf{78.99 $\pm$ 0.01} \\
          & sparsity rate & 50.81\% &    43.10\%  & 66.09\%  & \multicolumn{1}{c|}{89.57\%} & \multicolumn{1}{c|}{87.61\%} & 88.44\% \\
          & Prediction time(ms) & 0.97ms &    15.81ms & 0.99ms & \multicolumn{1}{c|}{3.99ms} & \multicolumn{1}{c|}{1.01ms} & 13.51ms \\
\hline
    \multicolumn{1}{c|}{\multirow{3}[2]{*}{ESCK-miniBatch}} & Accuracy(\%) & \textbf{91.90$\pm$ 0.01} & \multicolumn{1}{c|}{\textit{90.50$\pm$0.02}} & \textbf{94.45 $\pm$ 0.03}& \multicolumn{1}{c|}{\textit{88.25$\pm$0.08}} & \multicolumn{1}{c|}{\textit{90.01$\pm$0.02}} & \textit{77.13 $\pm$ 0.01} \\
          & sparsity rate & 46.78\%  &    40.29\%  &  37.58\%  & \multicolumn{1}{c|}{97.47\%} & \multicolumn{1}{c|}{94.78\%} & 95.37\% \\
          & Prediction time(ms) & 1.21ms & 16.86ms & 2.26ms & \multicolumn{1}{c|}{3.28ms} & \multicolumn{1}{c|}{0.99ms} & 5.98ms \\
\hline
    
    \end{tabular}}%
  \label{table.data3}%
\end{table*}%

\begin{table*}[h]
\centering
	\caption{Comparison of the embedding during the training stage}
\begin{tabular}{|l|c|l|c|c|c|c|}
\hline
\multirow{2}{*}{} & \multicolumn{6}{c|}{embedding time during training stage}                                                                                                                                                         \\ \cline{2-7} 
                  & \multicolumn{1}{l|}{usps} & mnist                      & \multicolumn{1}{l|}{gisette} & \multicolumn{1}{l|}{real-sim} & \multicolumn{1}{l|}{rcv1-bianry} & \multicolumn{1}{l|}{rcv1-multi} \\ \hline
Count-Sketch      & 1.2s                      & \multicolumn{1}{c|}{5.1s}  & 2.6s                         & 2.6s                          & 1.9s                             & 1.1s                            \\ \hline
ESCK-full         & 5.3s                      & \multicolumn{1}{c|}{26.4s} & 10.2s                        & 38.5s                         & 12.6s                            & 8.5s                            \\ \hline
ESCK-miniBatch    & \multicolumn{1}{l|}{1.1s} &  6.5s                       & 4.5s                         & 13.5s                         & 5.3s                             & 4.1s                            \\ \hline
\end{tabular}
\label{table:TrainingTime}
\end{table*}

\section{Experiments}
In this section, we compared our proposed algorithm with 
several different commonly-used random dimensionality reduction algorithms based on six real-life datasets. These six datasets are downloaded from LIBSVM website\cite{chang2011libsvm}. The summarization of these six datasets is shown in Table \ref{table.data2}. The sparsity rate as shown in the last column is the fraction of zeros in each input data matrix $\mathbf{X}$. As shown in Table \ref{table.data2}, there are four sparse datasets (mnist, real-sim, rcv1-binary, rcv1-multi) and two dense datasets (usps, gisette).

We evaluate the performance of following seven matrix sketching methods: 
\begin{itemize}
\item Gaussian: The sketching matrix is a random Gaussian Matrix \cite{dasgupta1999elementary}
\item Achlioptas: The sketching matrix is randomly generated from a discrete distribution and is sparser than Gaussian matrix \cite{achlioptas2003database}. 
\item Count-Sketch: original oblivious count-sketch method \cite{clarkson2017low}
\item SRHT : The sketching matrix is generated by the Subsampled Randomized Hadamard Transform (SRHT) \cite{tropp2011improved}
\item SRHT-topr  An improved variant of SRHT which is data-dependent  
\item ESCK-full: our proposed method that use full batch gradient descent with $\epsilon$-$\mathcal{L}_1$ ball projection to get the $k$-means centers.

\item ESCK-miniBatch: our proposed method that uses mini-batch gradient descent with $\epsilon$-$\mathcal{L}_1$ ball projection to get the $k$-means centers. This is more efficient than ESCK-full but with slightly lower accuracy. 
\end{itemize}
\textbf{Experimental Setting}. For the two dense datasets (usps and gisette), the feature values are scaled to [$-1$,1] using min-max normalization. We use five-fold cross validation to evaluate the accuracy. The regularization parameter $C$ in SVM  is chosen from $\{10^{-5},10^{-4}, \dots, 10^{4},10^{5}\}$. The $\epsilon$ parameter for $\epsilon$-$\mathcal{L}_1$ ball projection is fixed to $0.1$ and $\lambda$ parameter is chosen from $\{10, 20, 30, 40\}$ . Our experiments are performed on a desktop with Intel(R) Core(TM) i7-9700 CPU and @ 3.00GHz and 16.0 GB RAM.   

\textbf{Experimental Results}. We report the classification accuracy, sparsity rate of the sketched matrix and prediction time of different algorithms in Table \ref{table.data3}. The projected dimension $r$ for each dataset is given in the first row of this table. The results for different settings of projected dimension $r$ will be discussed later. The first four methods are data-oblivious random projection methods and the last three are data-dependent random projection methods. The best accuracy for each dataset is in bold and the second best accuracy for each dataset is in italic.

As shown in this Table, the data-dependent matrix sketching methods (i.e., SRHT-topr, ESCK-full and ESCK-miniBatch) get higher accuracy than data-independent matrix sketching methods. Among these six datasets, the proposed ESCK-full algorithm achieves the best accuracy on four datasets (i.e., mnist, real-sim, rcv1-binary, rcv1-multi) and the second best accuracy in two datasets (i.e., usps and gisette). Overall, our proposed method ESCK-full gets the best accuracy. The proposed method ESCK-miniBatch gets slightly lower accuracy than ESCK-full but gets higher accuracy than the other five matrix sketching methods. The results in Table \ref{table.data3} demonstrate that our proposed methods achieve better accuracy than other methods.

\begin{table}[htb]
		\centering
		\caption{Prediction Costs for Different Algorithms on a Single Input Sample $\mathbf{x}$}
		\begin{tabular}{|c|c|c|}
		\hline
        \multirow{2}{*}{Algorithms} & \multicolumn{2}{c|}{Prediction Cost} \\
        \cline{2-3}
         & Projection Cost & Classification Cost\\
        \hline
        \hline
	    Gaussian  & $O(dr)$ & $O(r)$  \\
        \hline
        Achlioptas &  $O(dr)$ & $O(r)$\\
        \hline
        Count-Sketch & $O(nnz(\mathbf{x}))$ & $O(nnz(\Tilde{\mathbf{x}}))$  \\
        \hline
        SRHT   & $O(dlog(d))$ & $O(r)$  \\
        \hline
        SRHT-topr & $O(dlog(d))$ &  $O(r)$  \\
		\hline
		ESCK & $O(nnz(\mathbf{x}))$ & $O(nnz(\Tilde{\mathbf{x}}))$  \\
		\hline
		
	\end{tabular}
    \label{table:cost_different_algorithms}
\end{table}

\begin{figure*}[h]
\centering
\subfigure{
\includegraphics[scale=0.4]{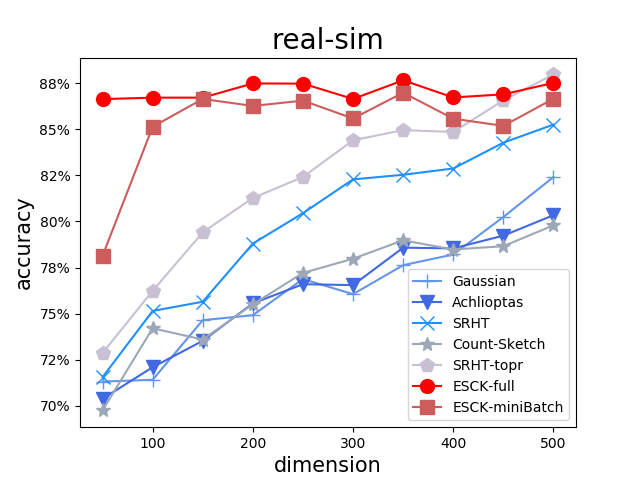}
}
\subfigure{
\includegraphics[scale=0.4]{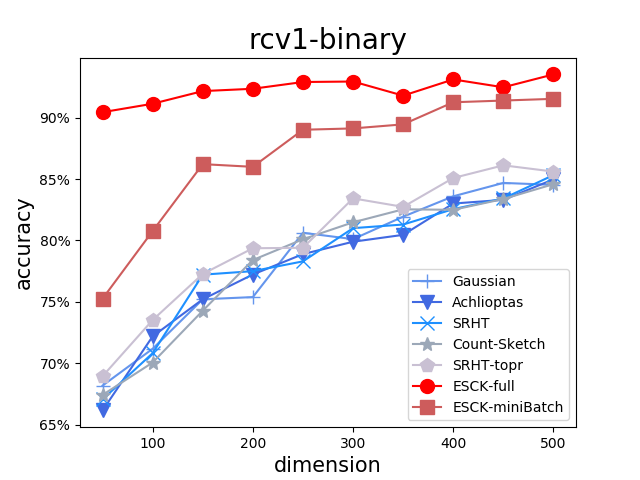}
}\\

\subfigure{
\includegraphics[scale=0.4]{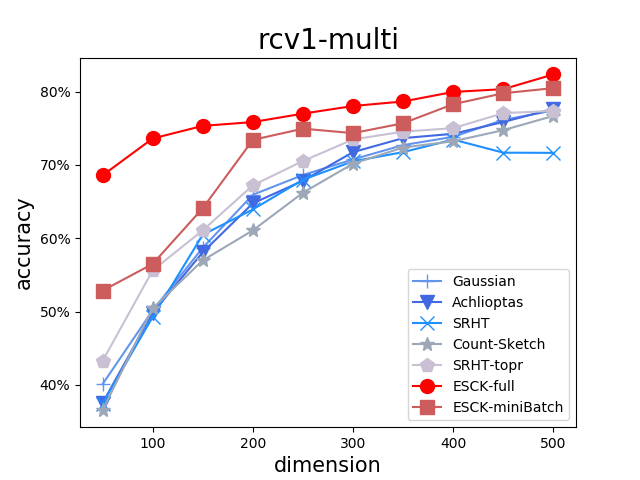}
}
\subfigure{
\includegraphics[scale=0.4]{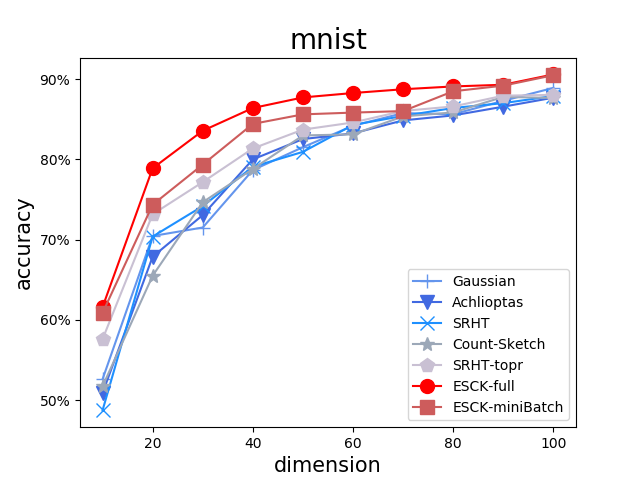}
\label{fig.5}
}
\caption{Impact of Projection Dimension $r$}\label{fig:impact_r}
\end{figure*}

\begin{figure*}[!h]
\centering
\subfigure[real-sim]{
\includegraphics[width=2.25in]{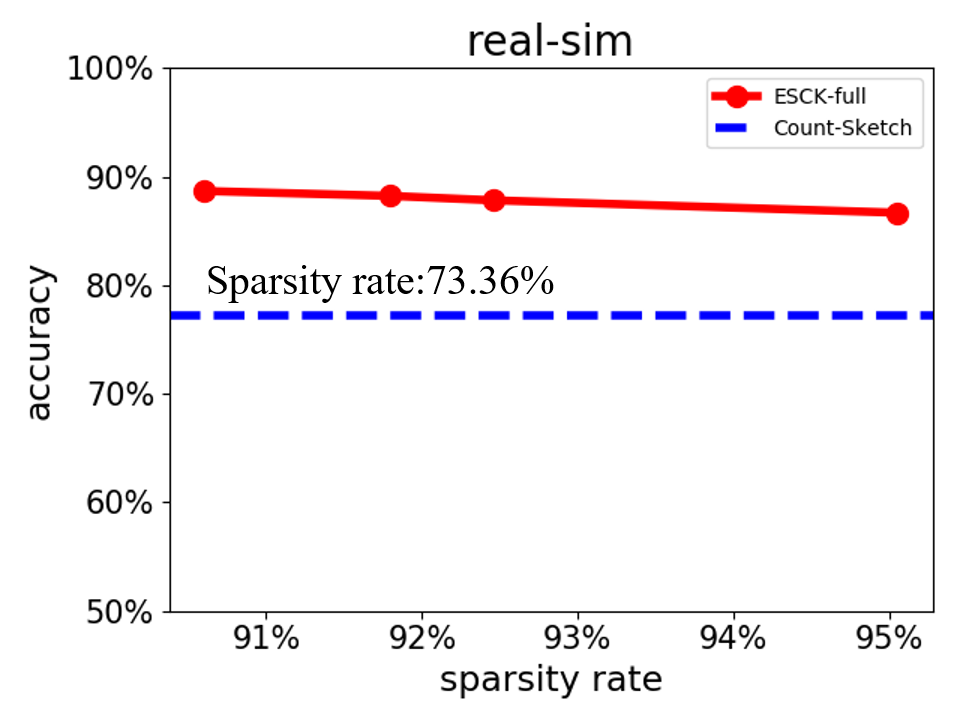}}
\subfigure[rcv1-binary]{
\includegraphics[width=2.25in]{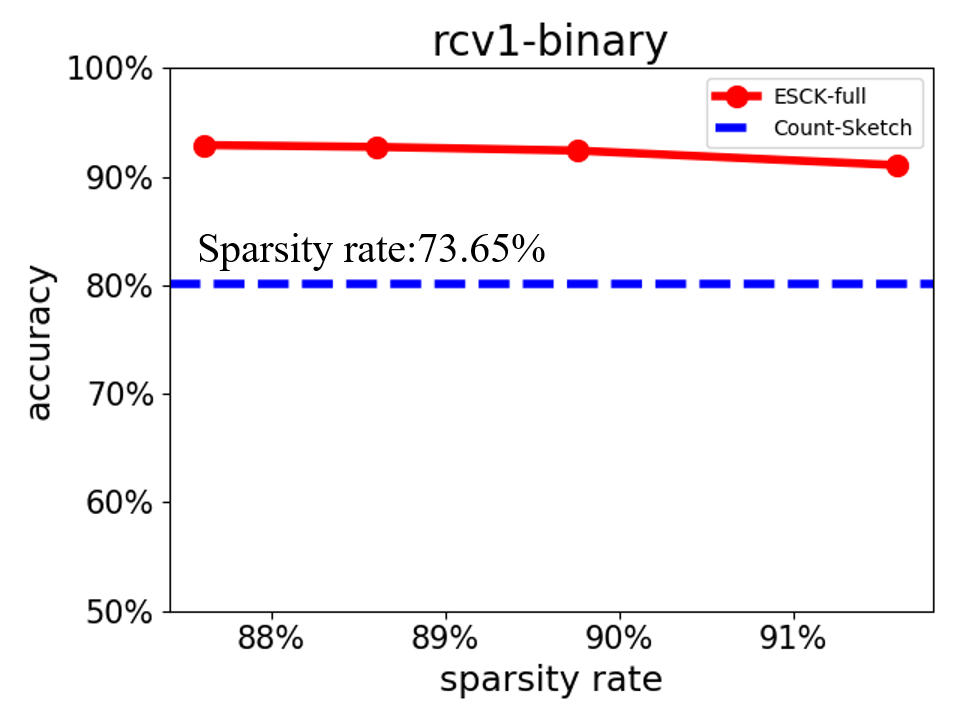}}
\subfigure[mnist]{
\includegraphics[width=2.25in]{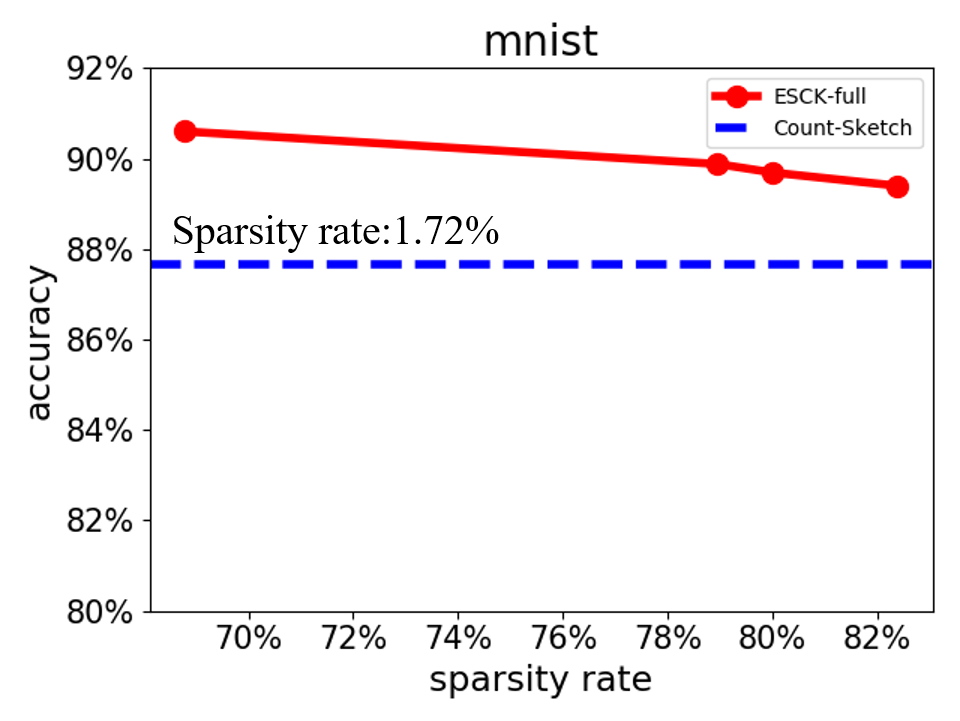}}
\centering
\caption{Accuracies with Different Sparsity Rates}
 \label{fig.contrl}
\end{figure*}

With respect to the sparsity rate of the sketched data, as expected, Gaussian, Achlioptas, SRHT and SRHT-topr will produce dense data even if the input data is sparse. The original count-sketch method and our proposed methods can produce sparse embedding for highly sparse input data. The sparsity rate of the sketched data produced by our proposed methods is higher than the count-sketch. Furthermore, our proposed method could result in sparse embedding for dense input data (e.g., usps and gisette). With respect to the prediction time, the prediction time of our methods is lower than other methods. The prediction cost for different algorithms is summarized in Table \ref{table:cost_different_algorithms}. Both count-sketch and our proposed ESCK are very efficient for prediction. 

We also compare the embedding time of our proposed method with the original count-sketch during the training stage. The results are shown in Table \ref{table:TrainingTime}. As expected, our proposed methods will be several times slower than the original count-sketch since we need to perform $k$-means clustering on the columns of $\mathbf{M}$. ESCK-miniBatch is faster than ESCK-full. 

\textbf{Impact of Projection Dimension $r$.} In Figure \ref{fig:impact_r}, we show the experimental results of all algorithms with different projection dimension $r$. As shown in this figure, our proposed method ESCK-full consistently get better accuracy than other matrix sketching methods. The other two data-dependent matrix sketching methods ESCK-minibatch and SRHT-topr also gets better than the four data-oblivious matrix sketching method. When the parameter $r$ is small, the accuracy improvement of our proposed method is large on real-sim, rcv1-binary and rcv1-multi datasets. 

\textbf{Impact of Sparse Sketched Matrix}
By tuning the $\lambda$ parameter $\epsilon$-$\mathcal{L}_1$ ball projection, our proposed method can result in a very sparse sketched matrix $\Tilde{\mathbf{X}}$. In this section, we would like to explore how the sparsity rate of the sketched matrix affects the classification accuracy. In Figure \ref{fig.contrl}, we show the sparsity rate and accuracy for count-sketch and ESCK-full. The blue dashed line shows the accuracy of count-sketch and the sparsity rate is annotated by the text above this line. The red line shows the accuracies of ESCK-full with different sparsity rate of the sketched matrix. 
As shown in Figure \ref{fig.contrl}, our proposed methods obtain better accuracy than count-sketch with a higher sparsity rate. As the sparsity rate increased, we can observe that accuracy could slightly decrease but still higher than count-sketch. On the mnist dataset, the count-sketch method generates a dense sketched matrix with a sparsity rate equals to 1.72\% and the accuracy of the subsequent classifier is 87.65\%. In comparison, the ESCK-full can generate a sparse sketched matrix with higher classification accuracy.

\section{Conclusion}
In this paper, we propose a novel data-dependent count-sketch algorithm that can produce more effective and sparse subspace embedding than the original data-independent count-sketch algorithm. Our new method applies $k$-means clustering algorithm to obtain the sketched data matrix. Sparse sketched data matrix is obtained by using gradient descent with $\epsilon$-$\mathcal{L}_1$ ball projection to optimize the $k$-means clustering objective function. We compared our proposed algorithm with the other five matrix sketching algorithms. Our experimental results on six real-life datasets have demonstrated that our proposed methods achieve higher classification accuracies than count-sketch and other matrix sketching methods. Also, our proposed methods can produce sketched matrix with high sparsity rate than other methods that can make the subsequent classification model more efficient than other methods.   


\begin{small}
\bibliography{myRef}
\end{small}

\end{document}